\documentclass{mc}

\usepackage[
  paper=a4paper,         % or letterpaper
  left=25mm,
  right=25mm,
  top=25mm,
  bottom=25mm,
  includeheadfoot,       % count header/footer inside margins
  heightrounded%         % avoid underfull vboxes
]{geometry}

\usepackage{booktabs}
\usepackage{multirow}
\usepackage{makecell}

\usepackage[utf8]{inputenc}
\usepackage{textgreek}
\usepackage{tikz}
\renewcommand\datereceived{}
\renewcommand\dateaccepted{}
\usepackage{mathrsfs}
\usepackage{caption}
\usepackage{subcaption}
\usepackage{multirow}
\usepackage{xcolor,sectsty}
\usepackage{booktabs}

\usepackage{algorithm}
\usepackage{algpseudocode}

\usepackage{titlesec}
\titleformat{\subsection}
  {\normalsize\bfseries} 
  {\thesection.\arabic{subsection}} 
  {1em}
  {\textit}

\titleformat{\subsubsection}
  {\normalsize}
  {\thesubsection.\arabic{subsubsection}}
  {1em}
  {\textit}

\definecolor{astral}{RGB}{46,116,181}
\definecolor{darkslategray}{rgb}{0.18, 0.31, 0.31}
\definecolor{warmblack}{rgb}{0.0, 0.46, 0.36}
\sectionfont{\color{black}\normalsize}

\usepackage{hyperref}
\begin{document}
	
	%%%%% title and author(s):
	% \markboth{Author(s)}{Short Title}
	% \title{Title}
	
	\title{{LiNO: Lifting based multiresolution neural operator}}
	
	% single author:
	% \author[AUTHOR]{AUTHOR}
	% \address{address of AUTHOR}
	% \email{{\tt email address of AUTHOR} (AUTHOR)}
	
%	\author[D. Juki\'c]{Dragan Juki\'c\corrauth}
%	\address{Department of Mathematics, University of Osijek, Trg Ljudevita Gaja 6, HR-31 000 Osijek, Croatia}
%	\email{{\tt jukicd@mathos.hr} (D. Juki\'c)}

	% multiple authors:
	% Please mark \corrauth after the name of the corresponding author.
	% different addresses:
	% \author[AUTHOR1 and AUTHOR2]{AUTHOR1\affil{1}\comma\corrauth and AUTHOR2\affil{2}}
	% \address{\affilnum{1}\ address of AUTHOR1\\   \affilnum{2}\ address of AUTHOR2}

	%same address:

% \iffalse 
    
	\author[Pandey {\em et al.}]{
    Himanshu Pandey\affil{1},
    Subham Patel\affil{1,2},
    and Ratikanta Behera\affil{1}
    }
    
    \address{
    \affilnum{1}Department of Computational and Data Sciences,
    Indian Institute of Science, Bangalore 560012, India
    \affilnum{2} IISc Mathematics Initiative, Department of Mathematics, Indian Institute of Science, Bangalore, 560012, India
    }
    
    \email{
    {\tt phimanshu@iisc.ac.in},
    {\tt subhampatel@iisc.ac.in},
    {\tt ratikanta@iisc.ac.in}
    }

\begin{abstract}
Recently, neural operators have shown promising outcomes for learning solution operators of differential equations directly from data. This framework learns a functional mapping from the parameter field to the solution field, enabling the prediction of an entire class of solutions rather than a specific instance. However, existing operators often struggle to capture both global dynamics and fine-scale structure altogether. To design an effective operator capable of representing multiscale features, a hierarchical multiscale decomposition framework is required. In this study, we develop the lifting neural operator (LiNO), a multiresolution operator built on the second‑generation wavelet lifting scheme. LiNO learns a multiresolution decomposition directly from data by parameterizing the lifting transform. This lifting transformation is adaptive to the underlying solution function and exactly invertible by construction, enabling information-preserving multiscale operator learning. In the lifted multiresolution space, the operator evolves coarse and directional detail coefficients separately, resulting in scale-aware modeling of the underlying physics.  We evaluate LiNO on several benchmarks including Darcy flow, Poisson equation, Allen-Cahn equation, compressible Navier-Stokes equation and Gray-Scott reaction-diffusion system. Altogether they cover a wide range of physical behavior, from multiscale phenomena, transport dominated and chaotic dynamics. LiNO shows a strong performance on these challenging benchmarks compared to state-of-the-art neural operators. The results suggest that adaptive multiresolution operator is a promising direction for scientific machine learning.

\end{abstract}

%%%%%%%%%%%%%%%%%%%%%% end %%%%%%%%%%%
%%%%% Keywords %%%%%%%%%%%
\keywords{Deep learning, Neural operator, Wavelets, Multiscale problems}
	
%%%% AMS subject classifications %%%%
% \ams{26A33, 68T07 35R30, 65Mxx}
	
%%%% maketitle %%%%%
\maketitle	
%%%% Start %%%%%%%%%%%%%%%%%%%%%%%%%%%%%%%%%%%%%%%%%%%%%%%%%%%%%%%%%%%%%%%%%%%%%%%%%%%%%%%%%%%%%%%%%%%%%%%%%%%%%%%%%%

\section{Introduction}\label{sec: Intro}

% Many scientific and engineering systems involve partial differential equations (PDEs) that require a solution field given a set of input parameters. Example includeTo name a few, fluid dynamics, reaction--diffusion systems, climate prediction, and phase-field evolution are some of them. Such systems often exhibit strongly nonlinear, multiscale, and chaotic dynamics involving localized coherent structures, transport-dominated evolution, and long-range temporal dependencies. Traditional numerical solvers such as finite difference \cite{fdm}, finite element \cite{hughes2003finite}, and spectral methods \cite{boyd2001chebyshev, canuto2007spectral} often require repeated expensive simulations for varying parameters, forcing terms, or initial conditions, making large-scale uncertainty quantification, inverse design, and real-time forecasting computationally expensive.

Many scientific and engineering systems involve partial differential equations (PDEs) that require an estimate of the solution field for a given set of input parameters. Such applications arise in many areas, including fluid dynamics, reaction--diffusion systems, climate prediction, and phase-field evolution. These systems are typically characterized by strongly nonlinear, multiscale, and chaotic dynamics involving localized coherent structures, transport-dominated evolution, and long-range temporal dependencies. Traditional numerical solvers such as finite difference \cite{fdm}, finite element \cite{hughes2003finite}, and spectral methods \cite{boyd2001chebyshev, canuto2007spectral} often require repeated expensive simulations for varying parameters, forcing terms, or initial conditions, making large-scale uncertainty quantification, inverse design, and real-time forecasting computationally expensive.

Neural operators have recently emerged as a powerful framework for learning mappings between infinite-dimensional function spaces directly from data \cite{kovachki}. Early developments such as deep operator networks (DeepONet) \cite{DeepONet} demonstrated that neural networks can approximate nonlinear operators through branch--trunk architectures. DeepONet established a theoretical foundation for operator learning and showed strong approximation capability across various parametric PDE problems. Subsequently, the Fourier neural operator (FNO) \cite{FNO} and its variants 
% \cite{YOU2026114530, LEHMANN2025113813}
 \cite{LI2025117732, LEHMANN2024116718} 
demonstrated remarkable success by parameterizing integral kernels in Fourier space and efficiently learning global convolution operators through spectral convolutions. With the fast Fourier transform (FFT), FNO achieves strong computational complexity and remarkable performance on several benchmark PDE systems, particularly for smooth dynamics and periodic domains. However, FNO inherently relies on global spectral representations and low-frequency mode truncation. While effective for smooth dynamics, such spectral truncation may irreversibly discard fine-scale information that is physically important in multiscale PDE systems. Furthermore, the global nature of Fourier representations can limit locality adaptation and reduce representational efficiency for strongly nonlinear systems involving sharp interfaces, localized structures, and multiscale phenomena. 
%{\color{red} spectral bias \cite{zhi2020frequency}}

To address some of these limitations, several neural operator architectures have been proposed. Wavelet neural operators (WNO) \cite{WNO} introduced wavelet-domain operator learning to better capture localized spatial structures through multiresolution analysis \cite{mallat1989multifrequency, daubechies1992ten}. Since wavelets provide simultaneous spatial and frequency localization, WNO improves representation of multiscale features compared to purely global spectral methods. However, WNO relies on predefined wavelet bases and fixed transform operators. Consequently, the decomposition itself remains non-adaptive to the nonlinear dynamics of the underlying solution. This suggests that while multiresolution representations are important, the adaptability and learnability of the decomposition mechanism itself play a critical role in robust neural operator learning.

Beyond FNO and WNO, several other spectral neural operators are also explored, such as the Laplace neural operator (LNO) \cite{LNO}. The LNO performs operator learning in the Laplace domain via learnable pole--residue representation. This formulation is well-suited for systems that evolve rapidly, such as impulse responses and wave-damping phenomena. However, like other transform-based neural operators, it does not explicitly support adaptive multiresolution operator learning. Subsequent advances, including U-Net-based operators (UNO) \cite{UNO} and convolutional neural operators (CNO) \cite{CNO} explored hierarchical multiscale representations via localized convolutions and encoder--decoder architectures. These methods effectively learn local structure and provide better feature propagation across multi-resolution. Nevertheless, repeated pooling and interpolation operations tend to create information bottlenecks and irreversible compression artifacts.

Despite significant progress in neural operator learning, several limitations remain. Most existing operators struggle to learn both global dynamics and fine-scale structure simultaneously \cite{Azizzadenesheli2024}. Furthermore, transform-based operators rely on fixed bases, which may not adapt effectively to a problem-specific solution field. In contrast, convolution and pooling-based operators are generally not exactly invertible and often lose physically important structures during long-horizon evolution \cite{long2026stft}. These issues are particularly evident in strongly nonlinear and chaotic PDE systems, where stable and accurate autoregressive predictions remain challenging.

% Despite substantial progress in neural operator learning, several fundamental challenges remain. Existing architectures often fail to capture both global dynamics and fine-scale localized structures effectively \cite{ Azizzadenesheli2024}. Transform-based approaches rely on fixed bases, which may not adapt effectively to a problem-specific solution field. Furthermore, information loss introduced through spectral truncation or pooling-based hierarchies can progressively degrade physically important structures during long-horizon evolution \cite{long2026stft}. These limitations are particularly evident in strongly nonlinear and chaotic PDE systems, where stable and accurate autoregressive predictions are still a challenge.

To address these challenges, we propose the lifting neural operator (LiNO), a multiresolution neural operator inspired by the lifting scheme \cite{SWELDENS1996186, Sweldens1998}. The lifting scheme has been used to develop adaptive numerical methods for solving PDEs \cite{vasilyev2000second}, and extended to multilevel operator approximation on curved and complex-geometry domains \cite{BeheraMehra15, MehraKev08}. Unlike transform-based operators that rely on fixed analytical transforms, LiNO formulates the lifting procedure as a learnable neural operator framework in which the transform itself is parameterized using neural networks and optimized directly from data. This formulation enables the multiresolution basis to adapt the underlying solution field, improving representation of nonlinear localized structures and multiscale interactions. Another key advantage of the lifting transform is its exact invertibility by construction, guaranteeing lossless decomposition and reconstruction without requiring orthogonality constraints or spectral truncation. Consequently, LiNO avoids irreversible information loss commonly encountered in Fourier truncation and pooling-based architectures.

We evaluate the proposed framework on several benchmark PDEs, including the Allen--Cahn equation, Darcy flow, Poisson equation, compressible Navier--Stokes equations in both viscous and inviscid regimes, and the Gray--Scott reaction--diffusion system. Experimental results demonstrate that LiNO consistently achieves superior or highly competitive accuracy compared with state-of-the-art neural operators, including FNO, WNO, UNO, CNO, and LNO, across these challenging benchmarks. In particular, LiNO exhibits especially strong performance for problems where preserving localized fine-scale structures is critical, such as inviscid Navier--Stokes turbulence, Allen--Cahn interface dynamics, and Gray--Scott structure formation, highlighting the effectiveness of adaptive invertible multiresolution operator learning.

The main contributions of this work are summarized as follows:

\begin{itemize}
    \item We propose LiNO, a novel lifting-based neural operator framework that integrates the second-generation wavelet lifting scheme into neural operator learning, providing a fully invertible and learnable multiresolution decomposition tailored to PDE training data.

    \item The proposed framework deploys learnable transform via predict and update operators that adapt the multiresolution decomposition directly to the intrinsic structure of PDE solution manifolds.

    \item LiNO offers a scale-aware operator evolution mechanism in the lifted multiresolution space that separately propagates coarse and directional detail coefficients while preserving localized physical interactions.

    \item We validate the effectiveness of LiNO through extensive experiments, achieving state-of-the-art or highly competitive performance across a diverse set of challenging PDE benchmarks.
\end{itemize}

The remainder of this paper is organized as follows: Section \ref{sec: Lino} presents the proposed LiNO framework, including the problem formulation, learnable lifting decomposition, multilevel lifting strategy, and operator evolution in the lifted space. Section \ref{sec: Experiments} describes the benchmark PDE problems, datasets and experimental settings. Section \ref{sec: Results} presents quantitative and qualitative comparisons against state-of-the-art neural operator frameworks. Finally, Section \ref{sec: conclude} summarizes the key findings, discusses the limitations of the current framework, and outlines directions for future research.

%%%%%%%%%%%%%%%%%%%%%%%%%%%%%%%%%%%%%%%%%%%%%%%%%%%%%%%%%%%%%%%%%%%%%%%%%%%%%%%%%%%%%%%%%%%%%%%%%%%%%%%%%%

\section{Lifting based multiresolution neural operator} \label{sec: Lino}

\subsection{Problem Formulation}

Let \(D \subset \mathbb{R}^d\) denote a bounded spatial domain and let
\begin{equation}
\mathcal{A} = \mathcal{A}(D;\mathbb{R}^{d_a}),
\qquad
\mathcal{U} = \mathcal{U}(D;\mathbb{R}^{d_u})
\end{equation}
be separable Banach spaces representing the input and output function spaces, respectively.
We consider the problem of learning a nonlinear solution operator
\begin{equation}
\mathcal{G}^{\dagger}: \mathcal{A} \rightarrow \mathcal{U},
\qquad
u = \mathcal{G}^{\dagger}(a),
\end{equation}
where \(a(x)\) denotes the input field, which may represent an initial condition, a forcing
term, or any other functional parameter of physical relevance and \(u(x)\) denotes the corresponding solution field of an underlying parametric PDE.

Given a dataset of paired observations
\begin{equation}
\{(a_j,u_j)\}_{j=1}^{N},
\qquad
u_j = \mathcal{G}^{\dagger}(a_j),
\end{equation}
our objective is to construct a parameterized neural operator
\(\mathcal{G}_{\theta}: \mathcal{A} \rightarrow \mathcal{U},\)
that approximates the unknown mapping \(\mathcal{G}^{\dagger}\). The parameters \(\theta\) are obtained by minimizing the mean relative
\(\ell_{2}\)-error loss:
\begin{equation}
    \text{Loss}(\theta)
    \;=\;
    \frac{1}{N}\sum_{j=1}^{N}
    \frac{\bigl\|\mathcal{G}_{\theta}(a_j) - u_j\bigr\|_{2}}
         {\bigl\|u_j\bigr\|_{2}}.
    \label{eq:loss}
\end{equation}

\subsection{Lifting scheme}

The lifting scheme \cite{Sweldens1998}, originally introduced as a spatial construction of
second-generation wavelets, provides a general mechanism for building biorthogonal multiresolution decomposition. This scheme constructs wavelets directly in the spatial domain without relying on Fourier transforms, enabling adaptive representations on irregular grids, complex domains, and nonuniform sampling.

Formally, let \(L^2(D)\) denote the Hilbert space of square-integrable functions defined over the domain \(D\). A second generation multiresolution analysis of $\mathbb{R}^d$ produces a sequence of subspaces \(\{V_j\}_{j \in \mathbb{Z}} \subset L^2(D)\), where each space is spanned by scaling functions \(\varphi_{j,k}\) and wavelet functions \(\psi_{j,m}\) at resolution level \(j\). Within this framework, the lifting scheme \cite{Sweldens1998} constructs a new biorthogonal filter bank from an existing one through two elementary operations:

\begin{enumerate}
  \item {Predict step:}
    Given the coarse (scaling-function) coefficients \(\{\lambda_{j,k}\}\),
    predict the detail (wavelet) coefficients \(\{\gamma_{j,k}\}\) at the same level:
    \begin{equation}
      \label{eq:predict_classical}
      \gamma_{j,m}^{\mathrm{new}}
        = \gamma_{j,m}^{\mathrm{old}}
          - \sum_{k \in \mathcal{K}(j,m)} s_{j,k,m}\,\lambda_{j,k}.
    \end{equation}

  \item {Update step:}
    Correct the scaling coefficients using the newly computed details to
    preserve global averages (vanishing moments):
    \begin{equation}
      \label{eq:update_classical}
      \lambda_{j,k}^{\mathrm{new}}
        = \lambda_{j,k}^{\mathrm{old}}
          + \sum_{m \in \mathcal{M}(j,k)} {\tilde{s}}_{j,k,m}\,\gamma_{j,m}^{\mathrm{new}}.
    \end{equation}
\end{enumerate}
The lifting theorem \cite[Thm.~8.1]{Sweldens1998} guarantees that the resulting filter bank is always biorthogonal, regardless of the choice of the predict and update operators, and that the transform is trivially invertible by reversing and negating each step.

\begin{figure}[t]
    \centering
    \includegraphics[width=0.9\linewidth]{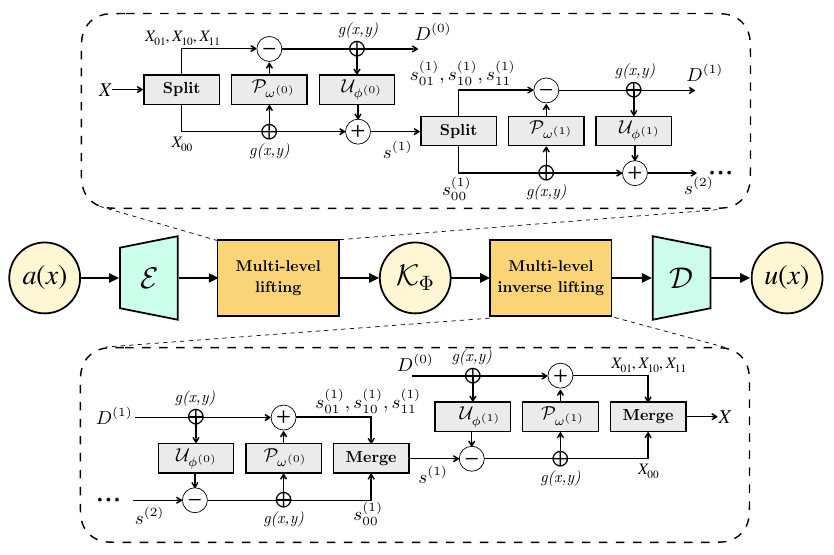}
    \caption{The input field \(a(x)\) is encoded into a latent representation using \(\mathcal{E}\). The learnable predict (\(\mathcal{P}_\omega\)) and update (\(\mathcal{U}_\phi\)) operators, augmented with spatial coordinates \(g(x,y)\), perform adaptive multiscale decomposition and reconstruction. Operator learning is carried out in the lifted space using the kernel operator \(\mathcal{K}_{\Phi}\), followed by a multi-level inverse lifting transform and a decoder \(\mathcal{D}\) to obtain the solution field \(u({x})\). The upper and lower insets illustrate the forward and inverse lifting procedures, respectively.
}
    \label{fig:architech}
\end{figure}

\subsection{Lifting neural operator}

The proposed LiNO framework first transforms the input field into a multiresolution lifting representation. The operator learning is then performed in the lifted latent space, where the solution is decomposed into coarse-scale approximation coefficients and fine-scale detail coefficients. This decomposition enables scale-aware operator learning while preserving invertibility and local consistency. In the present work, we reformulate the lifting scheme in a neural operator setting by parameterizing the predict \(( \mathcal{P}_{\omega} )\) and the update \((\mathcal{U}_\phi)\) operators with multi-layer networks, enabling adaptive multiscale decompositions learned directly from data. 

LiNO comprises the following components (see Figure~\ref{fig:architech}): feature encoding via encoder \( (\mathcal{E})\), lifting transform unit \( (\mathcal{L}) \) and corresponding inverse \( (\mathcal{L}^{-1}) \), the neural operator in lifted space \( (\mathcal{K}_\Phi) \) and a decoder \( (\mathcal{D}) \) for projection to the original physical space. The overall LiNO framework can be expressed as
\begin{equation}
\mathcal{G}_{\theta} = \mathcal{D} \circ \mathcal{L}^{-1} \circ \mathcal{K}_\Phi \circ \mathcal{L} \circ \mathcal{E}.
\end{equation}

Given an input field \( a(x) \in \mathbb{R}^{H \times W} \), we first project it into a latent feature space of dimension \(C\) using a local convolution network to get a latent tensor \( \mathcal{E}(a(x)) \rightarrow X \in \mathbb{R}^{C \times H \times W} \). This encoder maps the input field into a higher-dimensional latent representation, enabling richer cross-channel multiscale interactions. Then comes the central component of the LiNO, the lifting transform module, \( \mathcal{L}(X) \), which we discuss in detail in subsequent subsections. The operator evolution via \(\mathcal{K}_\Phi\) is performed in the lifted multiresolution domain, where the operator \(\mathcal{K}_\Phi\) is composed of convolution layers with GELU activation and a skip connection. After operator evolution, the inverse lifting transform reconstructs the latent representation, which is then projected back to physical solution space via a convolution decoder, \(\mathcal{D}\). For a 2-dimensional problem, we summarize the LiNO framework in Algorithm \ref{alg:lino}.

\subsubsection{Learnable lifting decomposition}

A central innovation of the proposed framework is the learnable lifting decomposition. Unlike classical second-generation wavelet transforms with fixed analytical filters, the proposed method learns the predict and the update operators directly from data, allowing the multiresolution basis to adapt to the intrinsic structure of the solution manifold.\\

\noindent\textbf{Anti-aliasing pre-filter:} Prior to dyadic decomposition, the latent tensor \(X\) is passed through a lightweight local smoothing operation, applied using a fixed low-pass convolution kernel. This anti-aliasing pre-filter suppresses high-frequency aliasing artifacts that would otherwise be introduced by the spatial sub-sampling.\\

\noindent\textbf{Interleaved sub-grid split:} Given the latent representation
\(
X \in \mathbb{R}^{C\times H\times W},
\)
we define the dyadic sub-lattices
\begin{equation}
\Lambda_{ij} = \left\{(2m+i,\,2n+j):\;m,n \in \mathbb{Z}\right\},\qquad(i,j)\in\{0,1\}^2.
\end{equation}

% Given the latent tensor \( X \in \mathbb{R}^{C \times H \times W},\) we partition it into four interleaved dyadic sub-grids:
% \begin{equation}
% \begin{split}
%  X_{00} &= X[:, :, 0::2, 0::2] \qquad (\text{even-even}), \\
% X_{01} &= X[:, :, 0::2, 1::2] \qquad (\text{even-odd}), \\
% X_{10} &= X[:, :, 1::2, 0::2] \qquad (\text{odd-even}), \\
% X_{11} &= X[:, :, 1::2, 1::2]  \qquad (\text{odd-odd}).
% \end{split}
% \end{equation}

The latent tensor is subsequently decomposed through restriction onto the four interleaved dyadic grids:
\begin{equation}
X_{ij} = \mathcal{R}_{ij}(X), \qquad(i,j)\in\{0,1\}^2,
\label{eq:dyadic_restriction}
\end{equation}
where
\(\mathcal{R}_{ij}\) denotes restriction to the sub-lattice \(\Lambda_{ij}.\) The component \(X_{00}\)
defines the coarse approximation component, while \(X_{01}, X_{10}, X_{11}\) encode directional fine-scale representations. This construction generalises directly to \(D\)-dimensions problem, yielding \(2^D\) sub-grids.\\

\begin{algorithm}[h!]
\caption{Lifting neural operator (LiNO)}
\label{alg:lino}
\begin{algorithmic}[1]

\Require Input field $a(x)$
\Ensure Solution field $u(x)$

\State $X \leftarrow \mathcal{E}(a)$
\Comment{encode to latent space}

\Statex \textit{// Forward lifting}

\For{$l = 0,\ldots,L-1$}

    \State
        \(
        \begin{aligned}
        X_{00} &\leftarrow X[:,:,0::2,0::2], &
        X_{01} &\leftarrow X[:,:,0::2,1::2],\\
        X_{10} &\leftarrow X[:,:,1::2,0::2], &
        X_{11} &\leftarrow X[:,:,1::2,1::2].
        \end{aligned}
        \)
    \Comment{dyadic sub-grids}

    \State $\bar X_{00} \leftarrow [X_{00},g]$
    \Comment{append coordinate grid}

    \State $(P_{01},P_{10},P_{11})
           \leftarrow
           \mathcal{P}_{\omega^{(l)}}(\bar X_{00})$
    \Comment{predict}

    \State $d_{ij}^{(l)}
           \leftarrow
           X_{ij}-P_{ij},
           \quad (i,j)\neq(0,0)$
    \Comment{detail residuals}

    \State $\bar d^{(l)}
    \leftarrow
    [d_{01}^{(l)},
     d_{10}^{(l)},
     d_{11}^{(l)},g]$
     \Comment{append coordinate grid}

    \State $s^{(l+1)}
           \leftarrow
           X_{00}
           +
           \mathcal{U}_{\phi^{(l)}}(\bar d^{(l)})$
    \Comment{update}

    \State Store
    $D^{(l)}
    =
    \{d_{01}^{(l)},
      d_{10}^{(l)},
      d_{11}^{(l)}\}$

    \State $X \leftarrow s^{(l+1)}$

\EndFor

\Statex \textit{// Operator evolution $\mathcal{K}_\Phi$}

\State $s \leftarrow \mathcal{K}_{\Phi,\mathrm{c}}(s^{(L)})$

\For{$l=0,\ldots,L-1$}
    \State $d_{01}^{(l)}
    \leftarrow
    \mathcal{K}_{\Phi,\mathrm{h}}(d_{01}^{(l)})$, ~
    $d_{10}^{(l)}
    \leftarrow
    \mathcal{K}_{\Phi,\mathrm{v}}(d_{10}^{(l)})$, ~
    $d_{11}^{(l)}
    \leftarrow
    \mathcal{K}_{\Phi,\mathrm{d}}(d_{11}^{(l)})$
\EndFor

\Statex \textit{// Inverse lifting}

\For{$l=L-1,\ldots,0$}

    \State $\bar d^{(l)}
    \leftarrow
    [d_{01}^{(l)},
     d_{10}^{(l)},
     d_{11}^{(l)},g]$

    \State $X_{00}
    \leftarrow
    s -
    \mathcal{U}_{\phi^{(l)}}(\bar d^{(l)})$
    \Comment{undo update}

    \State $\bar X_{00}
    \leftarrow
    [X_{00},g]$

    \State $(P_{01},P_{10},P_{11})
    \leftarrow
    \mathcal{P}_{\omega^{(l)}}(\bar X_{00})$

    \State $X_{01}
    \leftarrow
    d_{01}^{(l)} + P_{01}$,~
     $X_{10}
    \leftarrow
    d_{10}^{(l)} + P_{10}$,~
     $X_{11}
    \leftarrow
    d_{11}^{(l)} + P_{11}$
    \Comment{undo predict}

    \State $s \leftarrow
    \mathrm{Merge}(X_{00},X_{01},X_{10},X_{11})$

\EndFor

\State \Return $u=\mathcal{D}(s)$
\Comment{decode to physical space}

\end{algorithmic}
\end{algorithm}

\noindent\textbf{Prediction step:} Before prediction, the coarse component is augmented with a normalized
coordinate grid \(g \in \mathbb{R}^{2 \times H/2 \times W/2}\), yielding the coordinate-enriched representation
\(\bar{X}_{00} \in \mathbb{R}^{(C+2)\times H/2 \times W/2}\),
which supplies the predictor with explicit geometric context. Then the learnable predictor, \(\mathcal{P}_{\omega}\), predicts the high-frequency components from the coarse component:
\begin{equation}
(P_{01}, P_{10}, P_{11}) = \mathcal{P}_{\omega}(\bar{X}_{00}).
\end{equation}
The predictor network \(\mathcal{P}_{\omega}\) consists of convolutional layers with GELU activations and a \(1\times1\) projection bypass. The detail coefficients are then computed as prediction residuals:
\begin{equation}
    d_{ij} = X_{ij} - {P}_{ij}, ~(i,j) \neq (0,0)
\end{equation}
These residuals encode fine-scale structures that cannot be reconstructed solely from the coarse approximation.\\

\noindent\textbf{Update step:} The coarse approximation is subsequently refined using the detail coefficients. Each detail tensor is augmented with its corresponding coordinate grid, and the three resulting representations are stacked along the channel axis to form the coordinate-enriched detail set \(\bar{d} \in \mathbb{R}^{3(C+2)\times H/2 \times W/2}\), which jointly encodes multi-directional fine-scale structure alongside spatial position.  The learnable update operator \(\mathcal{U}_\phi\) (same architecture as \(\mathcal{P}_{\omega}\)) produces a correction:
\begin{equation}
    U = \mathcal{U}_\phi\!\left(\bar{d}\right),
    \label{eq:update_net}
\end{equation}
and the updated coarse representation is
\begin{equation}
    s = X_{00} + U.
    \label{eq:coarse_update}
\end{equation}
A single forward lifting transform can therefore be summarized as
\begin{equation}
\mathcal{L}^{0}(X)
=
\left(s, d_{01}, d_{10}, d_{11}\right).
\end{equation}
By construction, the predict–update operations are exactly invertible. Given \((s, d_{01}, d_{10}, d_{11})\), the inverse block first undoes the update step as
\begin{equation}
    X_{00} = s - \mathcal{U}_{\phi}(\bar d),
    \label{eq:inv_update}
\end{equation}
then recovers the remaining sub-grid components by adding back the predictions,
\begin{equation}
\begin{split}
    (P_{01}, P_{10}, P_{11}) &= \mathcal{P}_{\omega}(\bar X_{00})\\ 
    X_{ij} &= d_{ij} + P_{ij}, \qquad (i,j) \neq (0,0).
    \label{eq:inv_predict}
\end{split}
\end{equation}
~

\begin{proposition}[Exact Invertibility of \(\mathcal{L}\)]
  \label{prop:invertibility}
  Let \(\mathcal{P}_{\omega}\) and \(\mathcal{U}_\phi\) be arbitrary (possibly nonlinear)
  measurable maps.
  Then the forward lifting transform is a bijection, and its exact inverse exists.
\end{proposition}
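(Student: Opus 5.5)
We will prove the statement by exhibiting an explicit two-sided inverse. The forward map \(\mathcal{L}^0\) factors as a composition of three maps, and each is invertible regardless of the choice of \(\mathcal{P}_\omega\) and \(\mathcal{U}_\phi\). The three maps are the restriction/split, the predict shear, and the update shear.

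Concretely, write \(\mathcal{S}: X \mapsto (X_{00},X_{01},X_{10},X_{11})\) for the split onto the sub-lattices \(\Lambda_{ij}\) of \eqref{eq:dyadic_restriction}. The predict step is
\[
\Pi:(X_{00},X_{01},X_{10},X_{11})\mapsto (X_{00},\,X_{01}-P_{01},\,X_{10}-P_{10},\,X_{11}-P_{11}),
\]
with \((P_{01},P_{10},P_{11})=\mathcal{P}_\omega([X_{00},g])\). The update step is
\[
\Upsilon:(X_{00},d_{01},d_{10},d_{11})\mapsto (X_{00}+\mathcal{U}_\phi(\bar d),\,d_{01},d_{10},d_{11}).
\]
Then \(\mathcal{L}^0=\Upsilon\circ\Pi\circ\mathcal{S}\).

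First, \(\mathcal{S}\) is a bijection, because the four sets \(\Lambda_{ij}\), \((i,j)\in\{0,1\}^2\), partition the index grid (assuming \(H,W\) even). Its inverse is the Merge operation, which places each sub-grid back onto its lattice.

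Second, \(\Pi\) is a shear map: the first component passes through unchanged, and the other components are shifted by a function of the first component and the fixed grid \(g\) only. Hence \(\Pi^{-1}(X_{00},d_{01},d_{10},d_{11})=(X_{00},d_{ij}+P_{ij}(X_{00}))\). One checks \(\Pi^{-1}\circ\Pi=\mathrm{id}\) and \(\Pi\circ\Pi^{-1}=\mathrm{id}\) by direct substitution; the two shifts cancel because they are evaluated at the same unchanged argument \(X_{00}\). The same reasoning applies to \(\Upsilon\), with the roles of the components swapped: the details are left unchanged, so \(\Upsilon^{-1}(s,d)=(s-\mathcal{U}_\phi(\bar d),d)\), which is exactly \eqref{eq:inv_update}, and \(\Pi^{-1}\) is exactly \eqref{eq:inv_predict}.

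Composing the three inverses gives \((\mathcal{L}^0)^{-1}=\mathcal{S}^{-1}\circ\Pi^{-1}\circ\Upsilon^{-1}\), which is a two-sided inverse, so \(\mathcal{L}^0\) is a bijection. Measurability of \(\mathcal{P}_\omega,\mathcal{U}_\phi\) ensures that the inverse is also measurable, being built from sums and compositions of measurable maps. No other property of these maps (linearity, injectivity, Lipschitz continuity) is used.

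For the multilevel transform \(\mathcal{L}\), we argue by induction on the level \(L\). At level \(l\), the map sends \((s^{(l)},D^{(0)},\dots,D^{(l-1)})\) to \((s^{(l+1)},D^{(0)},\dots,D^{(l)})\). It acts as \(\mathcal{L}^0\), with parameters \(\omega^{(l)},\phi^{(l)}\), on the current coarse tensor, and as the identity on the stored details. It is therefore a bijection, and \(\mathcal{L}\), as a finite composition of bijections, is a bijection whose inverse is the reversed loop in Algorithm~\ref{alg:lino}.

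The main point needing care is the exact form of the dependencies. For the shear argument to work, the predictor must depend only on data left untouched by that step, namely \(X_{00}\) and \(g\), and the updater only on \(\bar d\) and \(g\). We must also check that \(g\) is a fixed, input-independent grid. Finally, the anti-aliasing pre-filter and the encoder are not part of \(\mathcal{L}\), so the claim concerns only the split–predict–update chain. We will state this scope explicitly, since a non-invertible pre-filter would otherwise break the bijection.
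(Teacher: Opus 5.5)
Your proof is correct and follows the paper's approach: an explicit two-sided inverse, obtained by undoing the update and then the predict step, and checked by direct substitution. Your factorization $\mathcal{L}^0=\Upsilon\circ\Pi\circ\mathcal{S}$ into the split and two shear maps is a more careful version of the same argument. It checks $\mathcal{L}^0\circ(\mathcal{L}^0)^{-1}=\mathrm{Id}$ on all tuples; the paper substitutes only for tuples assumed to lie in the image of $\mathcal{L}$, which by itself does not give surjectivity. It also explicitly treats the split/merge and makes explicit that the anti-aliasing pre-filter must be excluded from $\mathcal{L}$.
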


\begin{proof}
We construct an explicit two-sided inverse. Given any \((s,\,d_{01},\,d_{10},\,d_{11})\) in the image of \(\mathcal{L}\), define \(X_{00}\) by \eqref{eq:inv_update} and \(X_{01}, X_{10}, X_{11})\) by \eqref{eq:inv_predict}. Substituting the definition \(s = X_{00} + \mathcal{U}_{\phi}(\cdot)\) into \eqref{eq:inv_update} yields \(X_{00} = X_{00} + \mathcal{U}_{\phi}(\cdot) - \mathcal{U}_{\phi}(\cdot) = X_{00}\), confirming consistency. Substituting \(d_{ij} = X_{ij} - \mathcal{P}_{\omega}(\cdot)\) into \eqref{eq:inv_predict} gives \(X_{ij} = (X_{ij} - \mathcal{P}_{\omega}(\cdot)) + \mathcal{P}_{\omega}(\cdot) = X_{ij}\). Hence \(\mathcal{L}^{-1} \circ \mathcal{L} = \mathrm{Id}\) and \(\mathcal{L} \circ \mathcal{L}^{-1} = \mathrm{Id}\), so \(\mathcal{L}\) is a bijection. Since \(\mathcal{P}_{\omega}\) and \(\mathcal{U}_{\phi}\) are arbitrary, no smoothness or linearity is required.
\end{proof}

% \begin{remark}[Connection to FNO and WNO]
% Setting the lifting wavelet basis to fixed complex exponentials $\psi_{\alpha,\beta}(x) = e^{-2\pi i \langle x, \omega \rangle}$
% reduces the scheme to the Fourier integral operator of \cite{Li2021FNO}.
% Setting it to any orthogonal mother wavelet recovers the WNO of \cite{Tripura2023WNO}.
% LiNO generalizes both by using second-generation wavelets constructed via the learnable lifting
% scheme, which do not require translation invariance, enabling application to irregular grids,
% complex domains, and non-periodic boundary conditions without modification.
% \end{remark}

\subsubsection{Multi-level lifting}

A single lifting block halves the spatial resolution of the coarse field from \(H\times W\) to \((H/2)\times(W/2)\). We stack $L$ such blocks to form the multi-level lifting module, applying the forward pass recursively to the coarse output:
\begin{equation}
  \label{eq:multilevel_fwd}
  ({s}^{(l+1)},\; {D}^{(l)})
    = \mathcal{L}^{(l)}({s}^{(l)}),
  \qquad
  {s}^{(0)} = X,
  \quad l = 0, \dots, L-1,
\end{equation}
where ${D}^{(l)} = ({d}_{01}^{(l)},{d}_{10}^{(l)},{d}_{11}^{(l)})$ collects the three detail tensors at level $l$, having resolution \(H/2^{l+1} \times W/2^{l+1}\). After $L$ levels, the spatial resolution of the coarse field is \(H/2^L \times W/2^L\). Each block has its own independent parameter set \(\{\omega^{(l)}, \phi^{(l)}\}\), allowing the predict and update operators to adapt to the varying scales. The full lifting transform is the map \(\mathcal{L} : X \mapsto \bigl({s}^{(L)},\,{D}^{(0)},\ldots,{D}^{(L-1)}\bigr)\).

\begin{corollary}
  The $L$-level lifting transform is exactly invertible for any parameter setting, with inverse given by the sequential application of the inverse transform in reverse order.
\end{corollary}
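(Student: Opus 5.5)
The plan is to prove the corollary by induction on the number of levels $L$, using Proposition~\ref{prop:invertibility} as the single-level building block. We also need the elementary fact that the interleaved split $X \mapsto (X_{00},X_{01},X_{10},X_{11})$ is a bijection with inverse $\mathrm{Merge}$. The four sub-lattices $\Lambda_{ij}$ partition the index set, so restriction to them and re-interleaving are mutually inverse.

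\textbf{Fixing what the single-level map is.} Each level map $\mathcal{L}^{(l)}$ is the composition of three pieces: the split $\{\mathcal{R}_{ij}\}$, a fixed coordinate concatenation, and the predict/update pair with parameters $\omega^{(l)}$ and $\phi^{(l)}$. The concatenation with $g$ is not part of the data, so $g$ only enters as a fixed argument of $\mathcal{P}_{\omega^{(l)}}$ and $\mathcal{U}_{\phi^{(l)}}$.

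The anti-aliasing pre-filter needs care. I would treat it as acting on $X$ before the transform, i.e.\ as part of the encoder side. Otherwise the claim needs the filter itself to be invertible, which a generic low-pass kernel is not, since it can annihilate the Nyquist mode. This is the one point I expect to be delicate, and I would state the convention explicitly.

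With that convention, Proposition~\ref{prop:invertibility} gives, for every parameter choice, a two-sided inverse $(\mathcal{L}^{(l)})^{-1}$. Explicitly, this inverse undoes the update via \eqref{eq:inv_update}, undoes the prediction via \eqref{eq:inv_predict}, and then merges. Its only hypothesis is that the networks are arbitrary maps, so it holds uniformly in $\omega^{(l)}$ and $\phi^{(l)}$.

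\textbf{Induction.} Write the $L$-level map as $\mathcal{L}_L = (\mathrm{id}_{D^{(0)}} \times \cdots \times \mathrm{id}_{D^{(L-2)}} \times \mathcal{L}^{(L-1)}) \circ \mathcal{L}_{L-1}$. Here $\mathcal{L}^{(L-1)}$ acts only on the coarse component $s^{(L-1)}$, and the stored details pass through unchanged. A product of bijections with identities is a bijection, and a composition of bijections is a bijection with inverse equal to the composition of the inverses in reverse order. So, assuming $\mathcal{L}_{L-1}^{-1}$ exists,
\begin{equation*}
\mathcal{L}_L^{-1} = \mathcal{L}_{L-1}^{-1} \circ \bigl(\mathrm{id} \times (\mathcal{L}^{(L-1)})^{-1}\bigr).
\end{equation*}
Unrolling this gives exactly the inverse loop of Algorithm~\ref{alg:lino}, which runs from $l=L-1$ down to $0$, reconstructs $s^{(l)}$ from $(s^{(l+1)},D^{(l)})$, and merges. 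Both $\mathcal{L}_L^{-1}\circ\mathcal{L}_L=\mathrm{Id}$ and $\mathcal{L}_L\circ\mathcal{L}_L^{-1}=\mathrm{Id}$ follow directly.

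\textbf{Obstacles.} Apart from the pre-filter convention, the remaining bookkeeping is minor. First, the domain must have $H$ and $W$ divisible by $2^L$ so that every split is well defined. Second, the codomain should be taken as the full product space of coarse and detail tensors of the stated shapes. With that choice, surjectivity holds because the inverse is defined on all tuples, not only on the image.
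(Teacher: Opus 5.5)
Your proposal is correct and takes the same route as the paper, whose proof is the one-line observation that Proposition~\ref{prop:invertibility} applies inductively across the $L$ levels; your factorization $\mathcal{L}_L = (\mathrm{id}\times\mathcal{L}^{(L-1)})\circ\mathcal{L}_{L-1}$, with the inverse composed in reverse order, is exactly that induction written out. The extra bookkeeping you add is left implicit in the paper and only sharpens the same argument: bijectivity of split/merge, divisibility of $H$ and $W$ by $2^L$, surjectivity onto the full product of coefficient spaces, and keeping the possibly non-invertible anti-aliasing pre-filter outside the maps $\mathcal{L}^{(l)}$.
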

\begin{proof}
  Immediate from Proposition~\ref{prop:invertibility} applied
  inductively across all $L$ levels.
\end{proof}

\begin{remark}
  Proposition~\ref{prop:invertibility} guarantees that no information is lost in the forward and inverse lifting transform, making the decomposition and reconstruction lossless. In contrast, FNO truncates high-frequency Fourier modes, incurring an irreversible approximation error controlled by number of modes.
\end{remark}

\subsubsection{Operator evolution in lifted multiresolution space}

Following multiresolution decomposition, operator evolution is performed separately on the coarse approximation and directional detail coefficients through a hierarchy of localized convolutional operators. The coarse representation \(s^{(L)}\) is evolved using a dedicated coarse operator \(\mathcal{K}_{\Phi,\mathrm{c}}\), while the directional detail coefficients are propagated using orientation-specific operators
\(\mathcal{K}_{\Phi,\mathrm{h}}\),
\(\mathcal{K}_{\Phi,\mathrm{v}}\), and
\(\mathcal{K}_{\Phi,\mathrm{d}}\),
corresponding to horizontal, vertical, and diagonal interactions, respectively. These operators are shared across the decomposition levels and enforce scale-consistent multi-resolution dynamics while preserving the directional information. The design is inspired by the observation that similar localized physical interactions govern structures at multiple spatial scales. Each operator \(\mathcal{K}_{\Phi}\) is parameterized as a residual local convolutional block of the form
\begin{equation}
\begin{split}
Y'
=
W_3 *
\sigma\Big(
W_2 *
\sigma(
W_1 * Y
)
\Big)
+
W_{\mathrm{skip}}Y,
\end{split}
\end{equation}
where \(Y\) denotes either a coarse or directional detail coefficient tensor, \(W_1,W_2,W_3\) are learnable convolution kernels, \(*\) denotes convolution, \(\sigma(\cdot)\) represents the GELU activation function, and \(W_{\mathrm{skip}}\) is a learnable \(1\times1\) convolutional projection used to stabilize residual feature propagation.

\subsection{Computational complexity}

Consider an input feature tensor of spatial resolution $H \times W$ with latent channel dimension $C$ and $L$ lifting levels. At lifting level $l$, the feature tensor is decomposed into dyadic subbands of spatial size
\(
H/2^l \times W/2^l,
\)
where $l \in \{0,1,\dots,L-1\}$.
The predict, update, and coefficient evolution operators are implemented using localized convolutional mappings with compact kernels. Consequently, the computational cost at level $l$ scales as
\[
\mathcal{O}\left(
C^2 \frac{HW}{4^l}
\right).
\]

Summing across all decomposition levels yields
\[
\sum_{l=0}^{L-1}
\mathcal{O}\left(
C^2 \frac{HW}{4^l}
\right)
=
\mathcal{O}(C^2HW), \qquad (\text{bounded geometric series})
\]
demonstrating that the overall complexity of LiNO scales linearly with the spatial degrees of freedom.

\section{Experiments}\label{sec: Experiments}

In this section, we evaluate the proposed LiNO framework across a diverse set of benchmark problems spanning elliptic, parabolic, reaction--diffusion, and incompressible fluid dynamics systems. The considered examples include the Darcy flow equation, Poisson equation, Allen--Cahn equation, Navier--Stokes equations in both viscous and inviscid regimes, and the Gray--Scott reaction--diffusion system. These problems were selected to assess the capability of LiNO in learning operators associated with multiscale dynamics, sharp interfaces, nonlinear transport, and chaotic spatio--temporal evolution.

All experiments are performed on a GPU (NVIDIA RTX 4500 Blackwell, 32 GB memory) with CUDA 12.3. The models are trained using the Adam optimizer~\cite{adamkingma} with a cosine annealing learning rate scheduler that gradually decays the learning rate from \(10^{-3}\) to \(10^{-5}\). The hyperparameters corresponding to each benchmark problem are given in the Appendix~\ref{App_A}. We consider datasets from widely used scientific machine learning repositories and operator learning studies. In the following subsections, each benchmark problem, the governing equations, the construction of the dataset, and the experimental setup are described in detail.

% All experiments are conducted on a NVIDIA RTX 4500 Blackwell GPU with 32 GB memory using CUDA 12.3. The models are trained using the Adam optimizer~\cite{adamkingma} together with a cosine annealing learning rate scheduler, where the learning rate is gradually decayed from \(10^{-3}\) to \(10^{-5}\). The detailed hyperparameter configurations corresponding to each benchmark problem are provided in the Appendix~\ref{App_A}. We consider datasets from widely used scientific machine learning repositories and operator learning studies. The following subsections describe each benchmark problem, the corresponding governing equations, dataset construction, and experimental setup in detail.

\subsection{Allen--Cahn equation}

The Allen--Cahn equation is a nonlinear reaction--diffusion equation originally introduced to describe phase separation phenomena and anti-phase domain coarsening in multi-phase materials. Owing to its sharp moving interfaces, metastable dynamics, and strong nonlinearity, it has become a canonical benchmark for evaluating numerical methods and neural operators in learning multiscale spatiotemporal dynamics. The governing equation is given by
\begin{equation}
    \frac{\partial u(x,y,t)}{\partial t} = \phi \nabla^2 u(x,y,t) - (u(x,y,t)^3-u(x,y,t)),
\end{equation}
defined on the spatial domain \((x,y)\in(0,3)\times (0,3)\) and temporal domain \(t\in(0,20]\) with periodic boundary condition, where \(u(x,y,t)\) denotes the phase-field variable and \(\phi\) controls the interface thickness.

The dataset used for this example is adopted from \cite{WNO}, which considers the parameter setting \(\phi = 0.001\) and input initial conditions are sampled from a Gaussian random field with a covariance kernel
\begin{equation*}
K(x,y)=\tau^{\beta-1}\left(\pi^2(x^2+y^2)+\tau^2\right)^{-\beta},
\end{equation*}
where \(\tau = 15\) and \(\beta = 1\). For each sampled initial condition, the neural operator is trained to predict the phase-field solution at the final time \(t=20\). The dataset consists of \(1200\) training samples and \(300\) testing samples, each discretized on a uniform spatial resolution of \(128 \times 128\).

\subsection{Darcy flow}

The Darcy flow equation describes the steady-state flow of an incompressible fluid through a porous medium and serves as a fundamental benchmark problem for evaluating neural operators on elliptic PDEs with heterogeneous coefficients. The governing equation is given by
\begin{equation}
    -\nabla \cdot \left(\kappa(x,y)\nabla u(x,y)\right) = f(x,y), \qquad (x,y)\in(0,1)\times (0,1),
\end{equation}
where \(u(x,y)\) denotes the pressure field, \(\kappa(x,y)\) represents the permeability field of the porous medium, and \(f(x,y)\) is the forcing term.  

We consider the benchmark dataset provided in \cite{FNO}, where the forcing function is fixed as \(f(x,y)=1,\) and homogeneous Dirichlet boundary conditions are imposed. The input permeability field \(\kappa(x,y)\) is sampled from a Gaussian random field, as described in \cite{FNO}. The objective of the neural operator is to learn the nonlinear operator mapping \(\kappa(x,y)\mapsto u(x,y),\)
thereby predicting the pressure solution corresponding to an arbitrary permeability realization.

For this benchmark, we use \(800\) training samples and \(200\) testing samples, each discretized on a uniform spatial grid of resolution \(128 \times 128\).

\subsection{Poisson Equation}

The Poisson equation is a canonical elliptic PDE that arises in numerous physical applications including electrostatics, incompressible fluid dynamics, gravitational potential theory, and heat conduction. The two-dimensional Poisson equation is governed by
\begin{equation}
    -\Delta \phi(x,y) = s(x,y), \qquad (x,y)\in(0,1)\times(0,1),
\end{equation}
where $\phi(x,y)$ is the scalar potential field and $s(x,y)$ is the source term. The domain boundary is subject to Dirichlet boundary conditions. The data set is generated by analytical solutions in terms of superpositions of isotropic Gaussian functions:
\begin{equation*}
    \phi(x,y)
    =
    \sum_{k=1}^{20}
    c_k
    \exp\Big(
    -\lambda_k \big((x-x_k)^2+(y-y_k)^2\big)
    \Big),
\end{equation*}
where \(c_k\) denotes the amplitude, \((x_k,y_k)\) specifies the Gaussian center, and \(\lambda_k\) determines the spatial concentration of each Gaussian component. The Gaussian parameters are sampled uniformly according to
\[
c_k \sim \mathtt{U}(-1,1), \qquad
(x_k,y_k)\sim \mathtt{U}(0.1,0.9)^2, \qquad
\lambda_k \sim \mathtt{U}(60,80).
\]

The corresponding forcing field $s(x,y)$ and boundary values are derived analytically from the constructed solution field. For sufficiently large \(\lambda_k\), the computed solutions exhibit localized structures. This makes the problem particularly challenging for global operator learning approaches.

The neural operator is trained to approximate the potential field from the source distribution. In our example, all samples are represented on a uniform spatial grid of resolution \(128\times128\), and the dataset consists of \(800\) independently generated training samples and \(200\) testing samples with randomly sampled Gaussian parameters.

\subsection{Compressible Navier--Stokes equation}

The Navier--Stokes equation describes compressible fluid flow governed by the conservation of mass, momentum, and energy. The governing equations are expressed as
\begin{equation}
\begin{split}
\frac{\partial \rho}{\partial t} + \nabla \cdot (\rho \mathbf{u}) & \;=\, 0, \\[2mm]
\rho \left(
\frac{\partial \mathbf{u}}{\partial t}
+
\mathbf{u}\cdot\nabla \mathbf{u}
\right)
& \;=\,
-\nabla p
+
\mu \Delta \mathbf{u}
+
\left(
\xi + \frac{\mu}{3}
\right)
\nabla(\nabla\cdot\mathbf{u}),
\end{split}
\end{equation}
along with the total energy conservation equation, where \(\rho\) denotes the fluid density, \(\mathbf{u}\) is the velocity field, \(p\) represents pressure, and \(\mu\) and \(\xi\) correspond to the shear and bulk viscosity coefficients, respectively.

We consider two challenging regimes from PDEBench \cite{PDEBench, DARUS-2987_2022}: a viscous flow configuration with Mach number \(M=0.1\) and viscosity coefficients \(\mu=\xi=0.01\), and an inviscid regime with \(M=0.1\) and \(\mu=\xi=10^{-8}\). The inviscid configuration exhibits substantially sharper transport dynamics, making long-horizon prediction considerably more challenging. Both datasets employ periodic boundary conditions with random-field initial conditions generated in Fourier space.

The original PDEBench dataset provides simulations on a spatial domain \([0,1]\times [0,1]\) with spatial resolution \(512 \times 512\) and \(21\) temporal snapshots. In our experiments, the data is uniformly downsampled to a resolution of \(128 \times 128\) for computational efficiency while preserving the dominant flow characteristics. We adopt an autoregressive forecasting framework in which the first \(8\) temporal snapshots are provided as input, while the subsequent \(13\) future states are predicted recursively.

For the viscous configuration \((M=0.1,\ \mu=\xi=0.01)\), we use \(2000\) training samples and \(500\) testing samples. For the inviscid configuration \((M=0.1,\ \mu=\xi=10^{-8})\), we use \(800\) training samples and \(200\) testing samples. The learning objective is to forecast the temporal evolution of the density, pressure, and velocity fields.

\subsection{Gray--Scott reaction--diffusion system.}
The Gray--Scott model describes the evolution of two interacting chemical species whose concentrations self-organize into complex spatiotemporal structures through nonlinear reaction and diffusion processes. The governing equations are
\begin{equation}
\begin{split}
\frac{\partial u}{\partial t} &= D_u \Delta u - uv^2 + \alpha(1-u), \\[2mm]
\frac{\partial v}{\partial t} &= D_v \Delta v + uv^2 - (\alpha+\beta)v,
\end{split}
\end{equation}

where \(u\) and \(v\) denote the concentrations of the two interacting chemical species, \(D_u\) and \(D_v\) are the corresponding diffusion coefficients, while \(\alpha\) and \(\beta\) represent the feed and decay rates, respectively.

In this work, we consider the Spirals configuration of the Gray--Scott system from the well dataset \cite{thewell}, corresponding to the parameter setting \(\alpha = 0.018\) and \(\beta = 0.051\). The diffusion coefficients are fixed as \(D_u = 2\times10^{-5}\) and \(D_v = 1\times10^{-5}\). The dataset is generated using a high-order Fourier spectral solver with an implicit--explicit exponential time-differencing fourth-order Runge--Kutta scheme on a doubly periodic two-dimensional domain \([-1,1]\times [-1,1]\).

The dataset contains \(180\) training and \(20\) testing samples, each having two scalar fields corresponding to the concentrations of two chemical species. In our experiments, the spatial resolution is uniformly downsampled to \(128 \times 128\). Following the autoregressive forecasting setup, the first 8 temporal snapshots are provided as input to the model, while the subsequent 32 time steps are predicted recursively. This benchmark is particularly challenging because the dynamics simultaneously exhibit nonlinear reaction kinetics, diffusion-driven transport, and fine-scale spiral structure formation with long-range temporal dependencies.

%%%%%%%%%%%%%%%%%%%%%%%%%%%%%%%%%%%%%%%%%%%%%%%%%%%%%%%%%%%%%%%%%%%%%%%%%%%%%%%%%%%%%%%%%%%%%%%%%%%%%%%%%%%%%%%%%%%%%%%%

\section{Results and Discussion}\label{sec: Results}

\begin{table}[b!]
\centering
\caption{Mean relative $\ell_2$-error (mean $\pm$ standard deviation) computed over the test set for each neural operator across all benchmark problems}
\label{tab:l2err}
\renewcommand{\arraystretch}{2.4}
\begin{tabular}{l c c c c cc}
\toprule
& LNO & WNO & FNO & UNO & CNO &  \makecell{LiNO \\[-2pt] (proposed)} \\
\midrule
Allen--Cahn
& \makecell{1.59e-01 \\[-2pt] {\scriptsize $\pm$ 1.51e-02}}
& \makecell{1.08e-01 \\[-2pt] {\scriptsize $\pm$ 1.07e-02}}
& \makecell{1.69e-02 \\[-2pt] {\scriptsize $\pm$ 5.01e-03}}
& \makecell{4.53e-02 \\[-2pt] {\scriptsize $\pm$ 9.49e-03}}
& \makecell{2.20e-02 \\[-2pt] {\scriptsize $\pm$ 6.93e-03}}
& \makecell{\textbf{4.15e-03} \\[-2pt] {\scriptsize $\pm$ 1.26e-03}}
\\
Darcy Flow
& \makecell{2.96e-02 \\[-2pt] {\scriptsize $\pm$ 4.99e-03}}
& \makecell{2.58e-02 \\[-2pt] {\scriptsize $\pm$ 9.17e-03}}
& \makecell{7.69e-03 \\[-2pt] {\scriptsize $\pm$ 2.29e-03}}
& \makecell{8.59e-03 \\[-2pt] {\scriptsize $\pm$ 3.47e-03}}
& \makecell{5.58e-03 \\[-2pt] {\scriptsize $\pm$ 1.59e-03}}
& \makecell{\textbf{4.55e-03} \\[-2pt] {\scriptsize $\pm$ 9.08e-04}}
\\
Poisson equation
& \makecell{1.55e-01 \\[-2pt] {\scriptsize $\pm$ 3.17e-02}}
& \makecell{2.02e-01 \\[-2pt] {\scriptsize $\pm$ 6.45e-02}}
& \makecell{5.60e-02 \\[-2pt] {\scriptsize $\pm$ 2.77e-02}}
& \makecell{4.39e-02 \\[-2pt] {\scriptsize $\pm$ 2.58e-02}}
& \makecell{\textbf{1.82e-02} \\[-2pt] {\scriptsize $\pm$ 7.73e-03}}
& \makecell{2.06e-02 \\[-2pt] {\scriptsize $\pm$ 8.12e-03}}
\\
\makecell[l]{Navier--Stokes \\[-2pt] (viscous density)}
& \makecell{- \\[-2pt] }
& \makecell{1.41e-01 \\[-2pt] {\scriptsize $\pm$ 2.55e-01}}
& \makecell{7.09e-02 \\[-2pt] {\scriptsize $\pm$ 1.25e-01}}
& \makecell{2.08e-02 \\[-2pt] {\scriptsize $\pm$ 2.55e-02}}
& \makecell{3.84e-02 \\[-2pt] {\scriptsize $\pm$ 5.72e-02}}
& \makecell{\textbf{1.50e-02} \\[-2pt] {\scriptsize $\pm$ 3.32e-02}}
\\
\makecell[l]{Navier--Stokes \\[-2pt] (inviscid density)}
& \makecell{- \\[-2pt] }
& \makecell{2.51e-01 \\[-2pt] {\scriptsize $\pm$ 4.16e-01}}
& \makecell{1.66e-01 \\[-2pt] {\scriptsize $\pm$ 2.61e-01}}
& \makecell{9.77e-02 \\[-2pt] {\scriptsize $\pm$ 8.69e-02}}
& \makecell{1.14e-01 \\[-2pt] {\scriptsize $\pm$ 1.37e-01}}
& \makecell{\textbf{6.77e-02} \\[-2pt] {\scriptsize $\pm$ 1.14e-01}}
\\
\multirow{2}{*}{Gray--Scott} \hfill (\(u\))
& \makecell{- \\[-2pt] }
& \makecell{9.86e-02 \\[-2pt] {\scriptsize $\pm$ 4.22e-02}}
& \makecell{1.63e-01 \\[-2pt] {\scriptsize $\pm$ 2.90e-02}}
& \makecell{4.48e-02 \\[-2pt] {\scriptsize $\pm$ 1.92e-02}}
& \makecell{7.79e-02 \\[-2pt] {\scriptsize $\pm$ 1.71e-02}}
& \makecell{\textbf{9.98e-03} \\[-2pt] {\scriptsize $\pm$ 6.79e-03}}
\\
\hfill (\(v\))
& \makecell{- \\[-2pt] }
& \makecell{3.93e-01 \\[-2pt] {\scriptsize $\pm$ 5.44e-02}}
& \makecell{6.94e-01 \\[-2pt] {\scriptsize $\pm$ 2.57e-01}}
& \makecell{1.99e-01 \\[-2pt] {\scriptsize $\pm$ 5.31e-02}}
& \makecell{3.18e-01 \\[-2pt] {\scriptsize $\pm$ 9.60e-02}}
& \makecell{\textbf{5.79e-02} \\[-2pt] {\scriptsize $\pm$ 4.27e-02}}
\\

\bottomrule
\end{tabular}
\end{table}

The LiNO is compared against representative neural operator baselines, including LNO, WNO, FNO, UNO, and CNO. Performance is assessed using the relative $\ell_2$-error together with computational efficiency metrics including training time, memory consumption, and learnable parameter count. To further assess each method's capability to preserve localized fine-scale structures, we present qualitative comparisons of the learned solution fields and the corresponding point-wise error maps.

Table~\ref{tab:l2err} reports the mean relative $\ell_2$-errors along with standard deviation over the test samples across all considered benchmark problems. The lowest mean relative $\ell_2$-errors is highlighted in bold. Overall, LiNO consistently achieves either the best or highly competitive performance across all problems, demonstrating strong robustness across a broad range of physical regimes and operator learning tasks. The most significant performance improvements are observed for strongly multiscale and transport-dominated systems such as the Allen--Cahn equation, inviscid Navier--Stokes, and the Gray--Scott reaction--diffusion system. These problems contain sharp interfaces, localized coherent structures, nonlinear dynamics, and long-range temporal dependencies that are particularly challenging for operator learning. In these settings, the adaptive lifting decomposition employed by LiNO appears especially effective in preserving localized fine-scale information throughout. We notice that the LNO could not be evaluated on several large-scale spatiotemporal benchmarks due to prohibitively high memory consumption and extreme training time, we indicate this with a dash in respective entries in the table. 

\begin{table}[t!]
\centering
\caption{Total training time (in minutes) for each neural operator across all benchmark problems}
\label{tab:time}
\begin{tabular}{lcccccc}
\hline
Method & LNO & WNO & FNO & UNO & CNO & LiNO \\
\hline
Allen--Cahn & 335.35 & 46.40 & 20.05 & 25.98 & 95.13 & 42.41 \\
Darcy Flow & 221.90 & 70.87 & 11.81 & 22.53 & 61.72 & 16.71 \\
Poisson equation & 221.99 & 70.84 & 11.88 & 22.51 & 61.83 & 16.65 \\
\makecell[l]{Navier--Stokes \\[-2pt] (viscous)} & - & 864.87 & 445.93 & 274.95 & 1413.01 & 765.61 \\
\makecell[l]{Navier--Stokes \\[-2pt] (inviscid)} & - & 399.01 & 178.21 & 210.15 & 566.02 & 374.93 \\
Gray--Scott & - & 215.13 & 49.98 & 58.49 & 385.19 & 274.15 \\
\hline
\end{tabular}
\end{table}

\begin{figure}[b!]
    \centering
    \includegraphics[width=0.96\linewidth]{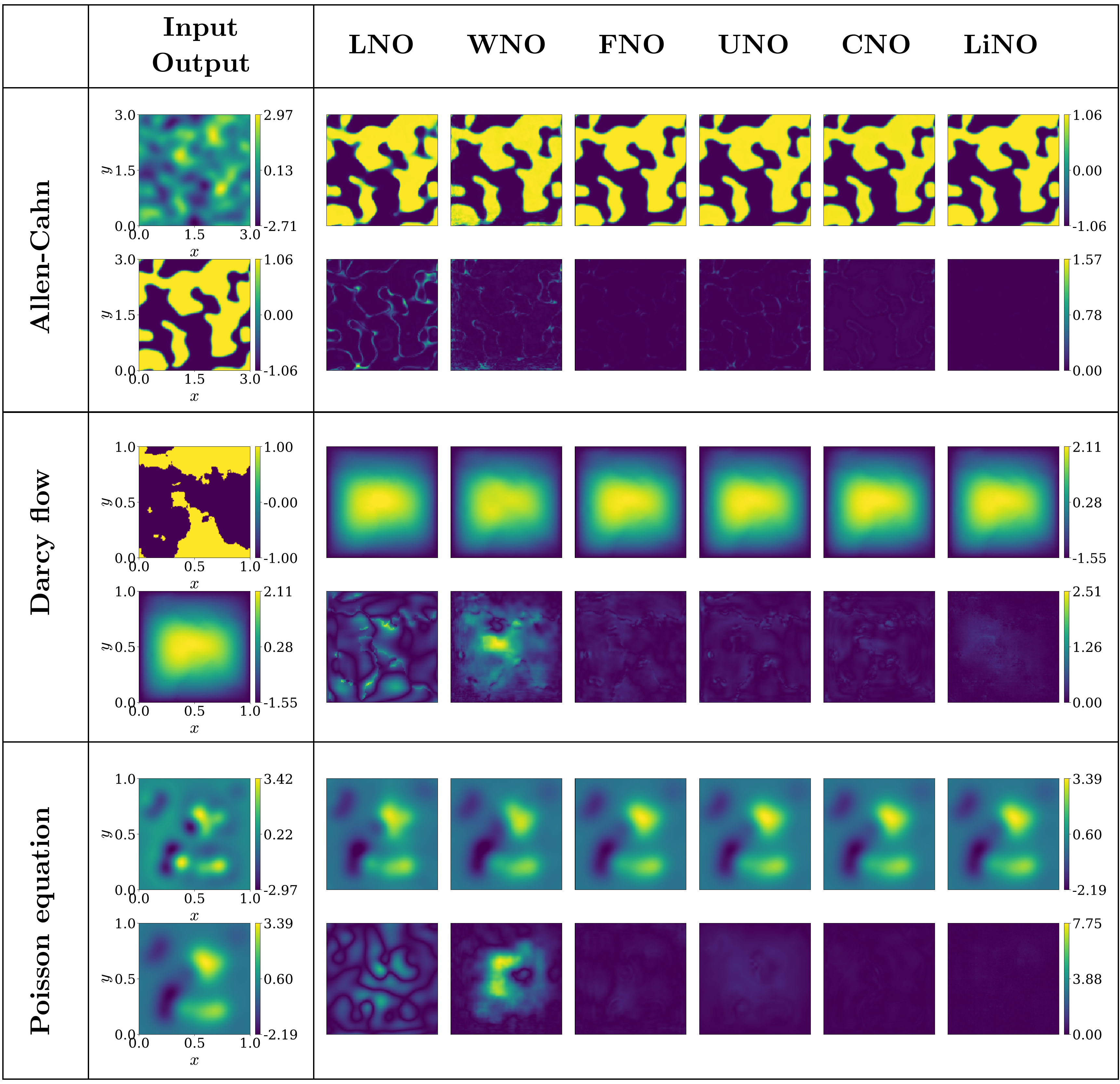}
    \caption{\textbf{Qualitative comparison of neural operator predictions on steady-state PDE benchmarks.}
    \textbf{From top to bottom row:} Allen--Cahn equation, Darcy flow, and Poisson equation.
    \textbf{Left Panel:} The input parameter field (top) and the corresponding reference solution (bottom) of a test sample.
    \textbf{Right Panel:} The predicted solution field (top) and the corresponding point-wise absolute error (bottom) for each neural operator model.}
    \label{fig:output0}
\end{figure}

\begin{figure}[t!]
    \centering
    \includegraphics[width=0.96\linewidth]{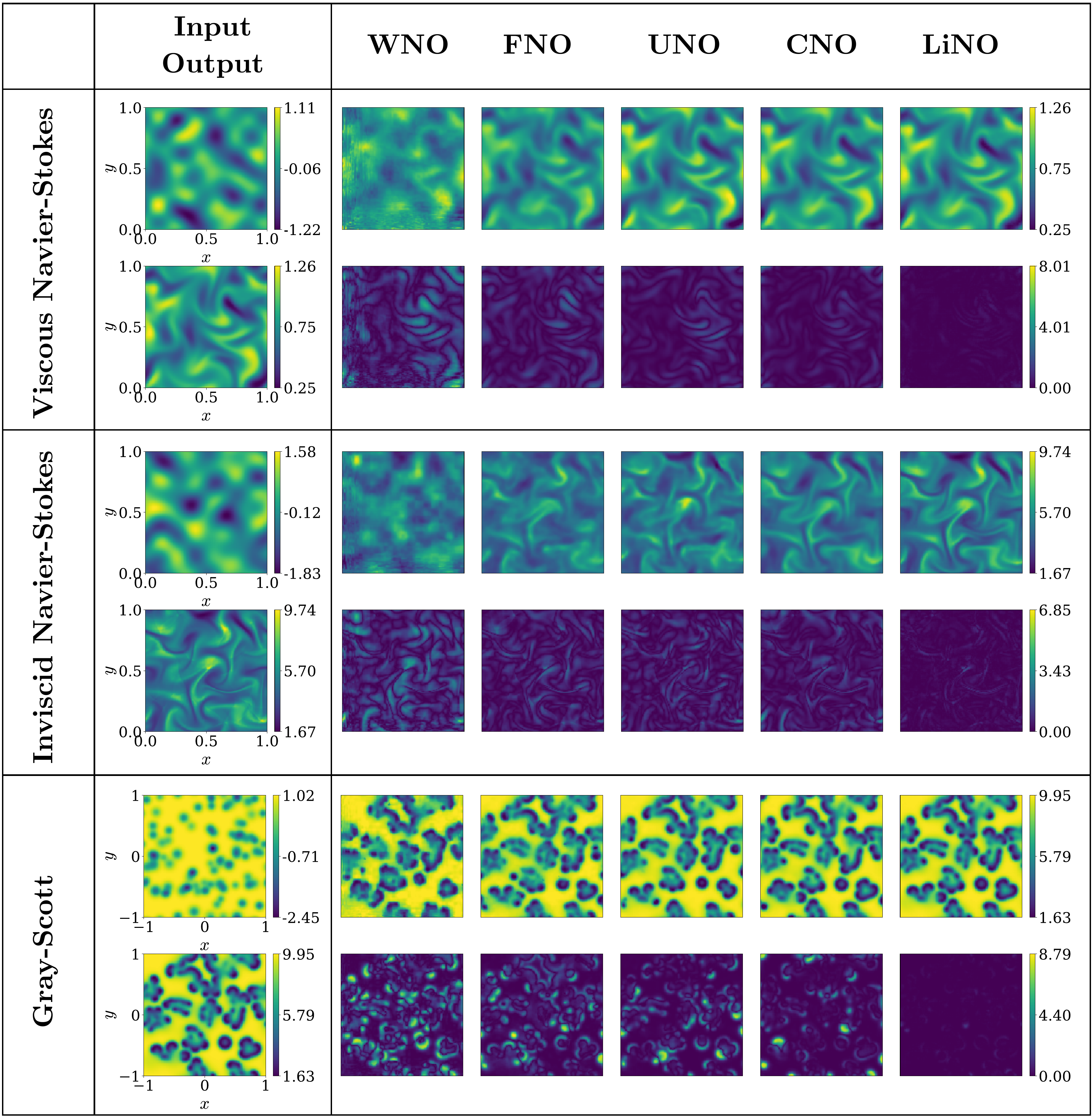}
    \caption{\textbf{Qualitative comparison of neural operator predictions on time-dependent dynamical systems.}
    \textbf{From top to bottom row:} Viscous Navier--Stokes (density plot), inviscid Navier--Stokes (density plot), and Gray--Scott reaction--diffusion equations (concentration $u$).
    \textbf{Left Panel:} The initial condition (top) and the corresponding ground-truth solution (bottom) at the final prediction time of a test sample.
    \textbf{Right Panel:} The predicted final-time solution (top) and the corresponding point-wise absolute error (bottom) for each neural operator model.}
    \label{fig:output1}
\end{figure}

Table \ref{tab:time} reports the total training time for each benchmark problem in minutes. As expected, purely spectral methods such as FNO generally exhibit the lowest computational cost due to the highly efficient FFT-based convolutions. Nevertheless, LiNO remains computationally competitive despite employing a fully learnable multilevel lifting transform. Although LiNO introduces additional computational cost associated with multilevel lifting operations, it shows substantial gains in predictive accuracy and multiscale representation capability.

% To further assess qualitative reconstruction fidelity, representative solution visualizations together with pointwise absolute error maps are provided in Appendix~A for all considered benchmark problems. These comparisons demonstrate that LiNO more accurately preserves localized interfaces, fine-scale transport structures, and nonlinear pattern dynamics while substantially reducing spatially concentrated prediction errors. In particular, LiNO exhibits improved reconstruction fidelity in regions containing sharp transitions, coherent structures, and multiscale interactions where competing approaches often introduce excessive smoothing or accumulated autoregressive errors.

In addition to predictive accuracy and training time, Appendix \ref{App_B} also reports peak GPU memory consumption in Table \ref{tab:memory} and total trainable parameter counts in Table \ref{tab:param} for all considered models. Despite its strong predictive performance, LiNO typically requires substantially fewer trainable parameters than competing multiscale architectures such as WNO and UNO, and the memory footprint remains manageable across all considered benchmarks and substantially lower than several heavy hierarchical baselines.

Furthermore, qualitative comparisons are presented in Figures \ref{fig:output0} and \ref{fig:output1}. Across both steady-state and time-dependent benchmarks, LiNO more accurately reconstructs the solution fields, as evidenced by consistently lower point-wise errors throughout the domain. In contrast, competing methods tend to produce smoother predictions and exhibit larger errors in regions containing fine-scale structures.

Collectively, these results demonstrate that LiNO provides an effective balance between predictive accuracy, multiscale representation capability, and computational efficiency. The proposed adaptive lifting formulation appears particularly advantageous for PDE systems involving localized structures, sharp transitions, and chaotic spatiotemporal dynamics, where preserving fine-scale information is critical for stable and accurate operator learning.

%%%%%%%%%%%%%%%%%%%%%%%%%%%%%%%%%%%%%%%%%%%%%%%%%%%%%%%%%%%%%%%%%%%%%%%%%%%%%%%%%%%%%%%%%%%%%%%%%%%%%%%%%%%%%%%%%%%%%%%%

\section{Conclusion and Limitations} \label{sec: conclude}

In this work, we proposed LiNO, a lifting-based neural operator that performs operator learning in an adaptive multiresolution space. The framework combines learnable lifting transforms with operator evolution on coarse and detail representations, enabling scale-aware operator learning while preserving multiscale information. The proposed method was evaluated on a diverse set of benchmark PDEs spanning elliptic, reaction--diffusion, and fluid dynamics problems. Across most benchmarks, LiNO achieved the best predictive accuracy and consistently demonstrated strong performance on problems characterized by localized structures, sharp interfaces, and long-horizon dynamics. Furthermore, LiNO remained computationally competitive while requiring substantially fewer trainable parameters than several existing multiscale neural operator architectures.

Despite these promising results, several limitations remain. The current framework relies on a recursive dyadic decomposition and is therefore naturally suited to grid resolutions that can be repeatedly subdivided by powers of two. Although padding, interpolation, or truncation can accommodate arbitrary resolutions, such preprocessing may introduce artifacts and degrade the benefits of the multiresolution representation. In addition, the present formulation is restricted to structured Cartesian grids, extending the lifting decomposition to unstructured meshes and complex geometries requires nontrivial redesign of the splitting, prediction, and update operators. Finally, a rigorous theoretical guarantees regarding approximation properties and stability under perturbations remain unavailable. Consequently, extending LiNO to irregular geometries and establishing rigorous theoretical guarantees constitute important directions for future work.

%%%%%%%%%%%%%%%%%%%%%%%%%%%%%%%%%%%%%%%%%%%%%%%%%%%%%%%%%%%%%%%%%%%%%%%%%%%%%%%%%%%%%%%%%%%
%\clearpage

\iffalse 
\subsubsection*{CRediT authorship contribution statement}
\noindent  \textbf{Himanshu Pandey:} Conceptualization, Methodology, Programming, Formal analysis, Writing - original draft.   
\textbf{Ratikanta Behera:} Conceptualization, Methodology, Formal analysis, Supervision, Writing - review \& editing.
\fi 

\subsubsection*{Acknowledgment}
The author Subham Patel would like to acknowledge the Axis Bank Centre for Mathematics and Computing, Indian Institute of Science, Bangalore, India, for the financial assistance to carry out this research work in IISc Mathematics Initiative, Department of Mathematics, Indian Institute of Science, Bangalore, India.

\noindent 

% \subsection*{Ethical Approval}
% In this manuscript, all the authors have agreed to authorship and approved the manuscript with consent for submission.
 
\subsubsection*{Funding details}
\noindent No funding source is available for this research.

\subsubsection*{Data availability statements}
All source code to reproduce the results of this study will be made available on request.
% Source code used in this study is available at {\url{https://github.com/phimanshu-22/AWPINN.git}}.

\subsubsection*{Conflicts of interest and declarations}	
The authors declare that they do not have any conflicts of interest. In addition, they also declare that this work is not under consideration anywhere. 	
	
\subsubsection*{Declaration of Generative AI Use}
During the preparation of this manuscript, the authors used AI tools for English language correction.
	
%%%%%%%%%%%%%%%%%%%%%%%%%%%%%%%%%%%%%%%%%%%%%%%%%%%%%%%%%%%%%%%%%%%%%%%%%%%%%%%%%%%%%%%%%%%%%%%%%%%%%%%%%%%%%%%%%%%%%%%%%	
		
		%%%% Acknowledgment %%%%%%%%
%		\section*{Acknowledgement}
%		The author would like to thank the referees for the helpful
%		suggestions.
		
%%%% Bibliography  %%%%%%%%%%
	
\bibliography{ReferencesSS}

%%%%%%%%%%%%%% Appendix %%%%%%%%%%%%%%

\newpage
\appendix
\section{Hyperparameters}\label{App_A}

\begin{table*}[h]
\centering
\caption{Hyperparameters used for LiNO across all benchmark PDEs. Dashes indicate parameters not applicable to a given problem.}
\label{tab:hyperparams}
\resizebox{\textwidth}{!}{%
\begin{tabular}{lcccccc}
\toprule
\textbf{Hyperparameter} & \textbf{Allen-Cahn} & \textbf{Darcy} & \textbf{Poisson} & \makecell{\textbf{NS} \\ \textbf{(viscous)}} & \makecell{\textbf{NS} \\ \textbf{(inviscid)}} & \textbf{Gray-Scott} \\
\midrule
\multicolumn{7}{l}{\textbf{Data}} \\
\midrule
Input fields        & \makecell{\texttt{Initial} \\ [-3pt] \texttt{condition}}    & \makecell{\texttt{Permeability} \\ [-3pt] \texttt{field}}      & \makecell{\texttt{Source} \\ [-3pt] \texttt{term}}       & \makecell{\texttt{Initial} \\ [-3pt] \texttt{conditions}} & \makecell{\texttt{Initial} \\ [-3pt] \texttt{conditions}} & \makecell{\texttt{Initial} \\ [-3pt] \texttt{concentrations}} \\
Target fields & \makecell{\texttt{Phase} \\ [-3pt] \texttt{field}}       & \makecell{\texttt{Pressure} \\ [-3pt] \texttt{field}}  & \makecell{\texttt{Potential} \\ [-3pt] \texttt{field}}       & \makecell{\texttt{Flow} \\ [-3pt] \texttt{fields}} & \makecell{\texttt{Flow} \\ [-3pt] \texttt{fields}} & \makecell{\texttt{Final} \\ [-3pt] \texttt{concentrations}} \\
Resolution       & $128 \times 128$                  & $128 \times 128$                & $128 \times 128$                & $128 \times 128$                  & $128 \times 128$                  & $128 \times 128$    \\
$n_\text{train}$     & 1200               & 800              & 800              & 2000               & 800                & 180  \\
$n_\text{test}$      & 300                & 200              & 200              & 500                & 200                & 20   \\
$T_\text{in}$        & --                  & --                & --                & 8                  & 8                  & 8    \\
$T_\text{out}$       & --                  & --                & --                & 13                 & 13                 & 32   \\
\midrule
\multicolumn{7}{l}{\textbf{Training}} \\
\midrule
Epochs               & 500  & 500  & 500  & 500  & 500  & 500  \\
Batch size           & 128  & 64   & 128  & 32   & 20   & 16   \\
Initial lr $\eta$ & 1e-3 & 1e-3 & 1e-3 & 1e-3 & 1e-3 & 1e-3 \\
Minimum lr    & 1e-5 & 1e-5 & 1e-5 & 1e-5 & 1e-5 & 1e-5 \\
Optimizer            & Adam & Adam & Adam & Adam & Adam & Adam \\
Scheduler & \makecell{Cosine \\ [-3pt] Annealing} & \makecell{Cosine \\ [-3pt] Annealing} & \makecell{Cosine \\ [-3pt] Annealing} & \makecell{Cosine \\ [-3pt] Annealing} & \makecell{Cosine \\ [-3pt] Annealing} & \makecell{Cosine \\ [-3pt] Annealing} \\
\midrule
\multicolumn{7}{l}{\textbf{Architecture}} \\
\midrule
Width $C$            & 64   & 64   & 32   & 32   & 64   & 32   \\
Levels $L$           & 4    & 4    & 4    & 4    & 3    & 4    \\
$\mathcal{P}_{\omega}$, $\mathcal{U}_{\phi}$ hidden      & 32   & 64   & 32   & 32   & 64   & 32   \\
$\mathcal{P}_{\omega}$, $\mathcal{U}_{\phi}$ conv layers & 3+1$\times$1 & 3+1$\times$1 & 4+1$\times$1 & 3+1$\times$1 & 3+1$\times$1 & 3+1$\times$1 \\
$\mathcal{K}_{\Phi}$ Operator  & 3 conv & 3 conv & 3 conv & 3 conv & 3 conv & 3 conv \\
\bottomrule
\end{tabular}%
}
\end{table*}

% \newpage

% \renewcommand{\thetable}{B.\arabic{table}} 
\setcounter{table}{0}   
\section{Computational efficiency}\label{App_B}

\begin{table}[h]
\centering
\caption{Maximum GPU memory usage (in GB) recorded during training of each neural operator across all benchmark problems}
\label{tab:memory}

\begin{tabular}{lcccccc}
\hline
Method & LNO & WNO & FNO & UNO & CNO & LiNO \\
\hline
Allen--Cahn & 22.37 & 7.65 & 6.87 & 8.64 & 24.72 & 11.59 \\
Darcy Flow & 22.37 & 7.61 & 7.39 & 11.36 & 24.72 & 9.65 \\
Poisson equation & 22.37 & 7.61 & 7.39 & 11.36 & 24.72 & 9.65 \\
\makecell[l]{Navier--Stokes \\[-2pt] (viscous)} & - & 11.61 & 18.52 & 12.02 & 28.36 & 22.46 \\
\makecell[l]{Navier--Stokes \\[-2pt] (inviscid)} & - & 19.20 & 18.52 & 23.27 & 28.38 & 26.78 \\
Gray--Scott & - & 13.41 & 12.20 & 13.94 & 17.14 & 26.77 \\
\hline
\end{tabular}
\end{table}

\newpage
~
\begin{table}[t]
\centering
\caption{Total count of learnable parameters for each neural operator across all benchmark problems}
\label{tab:param}
\begin{tabular}{lcccccc}
\hline
Method & LNO & WNO & FNO & UNO & CNO & LiNO \\
\hline
Allen--Cahn & 27937 & 31740545 & 8409729 & 32870273 & 10555073 & 1200961 \\
Darcy Flow & 27937 & 21259009 & 8413953 & 32870273 & 10555073 & 1533249 \\
Poisson equation & 27937 & 21259009 & 8413953 & 32870273 & 10555073 & 1533249 \\
\makecell[l]{Navier--Stokes \\[-2pt] (viscous)} & - & 7937572 & 8412420 & 8218644 & 2035156 & 543460 \\
\makecell[l]{Navier--Stokes \\[-2pt] (inviscid)} & - & 31742724 & 8412420 & 32871460 & 2035156 & 1717188 \\
Gray--Scott & - & 7936930 & 2104482 & 8218322 & 2024786 & 542882 \\
\hline
\end{tabular}
\end{table}

% \newpage
% \setcounter{figure}{0}
% \section{Qualitative comparison}\label{App_C}

\end{document}